\documentclass{article} %
\usepackage{iclr2027_conference,times}

\usepackage{amsmath,amsfonts,bm}

\def\1{\bm{1}}

\DeclareMathAlphabet{\mathsfit}{\encodingdefault}{\sfdefault}{m}{sl}
\SetMathAlphabet{\mathsfit}{bold}{\encodingdefault}{\sfdefault}{bx}{n}

\usepackage{hyperref}
\usepackage{url}
\usepackage[most]{tcolorbox}
\usepackage{mathtools}
\usepackage{multirow}
\usepackage{booktabs}
\usepackage{amsthm}
\usepackage{graphicx}
\usepackage{subcaption}
\newtheorem{lemma}{Lemma}

\title{A Critical look at Power Sampling when Test-Time Sharpening LLMs}

\author{Abiel Malkani Talwar, Christopher Nemeth, Henry Moss \\
Lancaster University \\
Lancaster, United Kingdom \\
\texttt{\{a.m.talwar,c.nemeth,henry.moss\}@lancaster.ac.uk}
}

\iclrfinalcopy %
\begin{document}

\maketitle
\thispagestyle{fancy}
\lhead{Preprint.}

\begin{abstract}
Reinforcement learning with verifiable rewards (RLVR) has substantially improved LLM reasoning, yet growing evidence suggests it mainly performs \emph{distribution sharpening}: concentrating probability mass on reasoning paths the base model already produces. Recent work therefore seeks RLVR's gains without needing reward signals, by sampling from the power distribution $\pi_\alpha \propto \pi_{\text{base}}^\alpha$ via Markov chain Monte Carlo or sequential Monte Carlo. We show that these samplers fail to converge to $\pi_\alpha$ under realistic compute budgets, and that their gains are largely recreated via simple Best-of-$N$ with appropriate grading . This suggests they succeed by selecting high-probability sequences, not by approximating $\pi_\alpha$, raising a natural question: is the power distribution itself a worthwhile target? We test this hypothesis directly by fine-tuning the base model to match $\pi_\alpha$, sidestepping sampler convergence issues entirely. Despite approximating the target far more faithfully than existing samplers, this strategy yields gains that are noisy and specific to certain models and sampling tasks. Together, our results suggest that the value of power sampling lies not in faithfully targeting $\pi_\alpha$, but in the simpler goal of extracting sequences the base model already deems likely.
\end{abstract}

\section{Introduction}
\label{sec:introduction}

Reinforcement learning with verifiable rewards (RLVR) has become a cornerstone of large language model (LLM) post-training \citep{shao2024deepseekmathpushinglimitsmathematical, Guo_2025, hu2025openreasonerzeroopensourceapproach, lambert2025tulu3pushingfrontiers}. It has proven especially effective in domains with checkable answers, such as mathematics, code generation, and formal reasoning, where it elicits multi-step reasoning that is difficult to instil through supervised fine-tuning alone \citep{chen2021evaluatinglargelanguagemodels, Li_2022, hou2024largelanguagemodelssoftware, zimmer2025bourbakiselfgeneratedgoalconditionedmdps}.

A growing body of work, however, questions whether RLVR unlocks genuinely new reasoning behaviour or merely amplifies what the base model can already do. \citet{yue2025doesreinforcementlearningreally} find that its accuracy gains come at the cost of diversity, as revealed by pass@$k$, while \citet{he2025rewardingunlikelyliftinggrpo} show that GRPO exhibits a degenerate rank bias, funnelling additional probability mass toward sequences the base model already ranks highly. These findings raise a natural question: can the gains attributed to RLVR be recovered simply by sampling more cleverly from the base model's own high-probability regions, with no verifier or reward signal at all?

A growing body of work claims the answer is yes: simply sample from the \emph{power distribution} $\pi_\alpha \propto \pi_{\text{base}}^\alpha$ with $\alpha > 1$.\citet{karan2025reasoningsamplingbasemodel} proposed Markov chain Monte Carlo (MCMC) sampling, and subsequent work turned to sequential Monte Carlo (SMC) for efficiency \citep{azizi2026powersmclowlatencysequencelevelpower, wang2026cache}. These methods are motivated by a compelling theory --- that faithfully approximating $\pi_\alpha$ should yield substantial reasoning improvements.

In this work, we critically examine this popular claim. We analyze why existing samplers fail, propose POWER-RLOO to directly target $\pi_\alpha$ with high fidelity, and test whether faithful approximation actually drives reasoning gains. Our contribtuions and conclusions are as follows:
\begin{itemize}
\itemsep0.5em 
    \item \textbf{We show that recent SMC schemes fail to converge} to $\pi_\alpha$ (Section~\ref{sec:pitfalls}), instead behaving as likelihood-guided search with little sensitivity to $\alpha$ (Figure~\ref{fig:log_prob_comparision}).
    
    \item \textbf{We find that SMC gains are largely recreated by a simple Best-of-$N$ variant} that periodically resamples the highest-likelihood partial sequences, suggesting that these samplers succeed by selecting high-probability sequences rather than by approximating $\pi_\alpha$.

    \item  \textbf{We introduce Power-RLOO as a fine-tuning scheme that faithfully targets $\pi_\alpha$}. Critical to efficiency, we exploit the connection between power distribution and low-temperature sampling, allowing our adapter to learn only the non-local lookahead correction (Section~\ref{sec:power-rloo}). 

    \item \textbf{We find that faithful approximation is insufficient.} Despite approximating the target far more effectively than previous work (Figure~\ref{fig:log_prob_comparision}), we find that explicitly targeting $\pi_\alpha$ yields accuracy gains beyond low-temperature decoding only when the base model is already reasonably well-aligned with the task, contradicting claims that approximation quality drives improvements.
    \item \textbf{We show sharpening elicits latent capacity, not just selection.} Despite the inherently sharpening nature of the power distribution, modest fine-tuning actually increases pass@k, suggesting it reveals existing reasoning capacity rather than merely narrowing the output distribution—a capacity simpler selection methods can then also access. (Section~\ref{sec:analysis}).

\end{itemize}

\begin{figure}[t]
    \centering
    \includegraphics[width=0.98\linewidth]{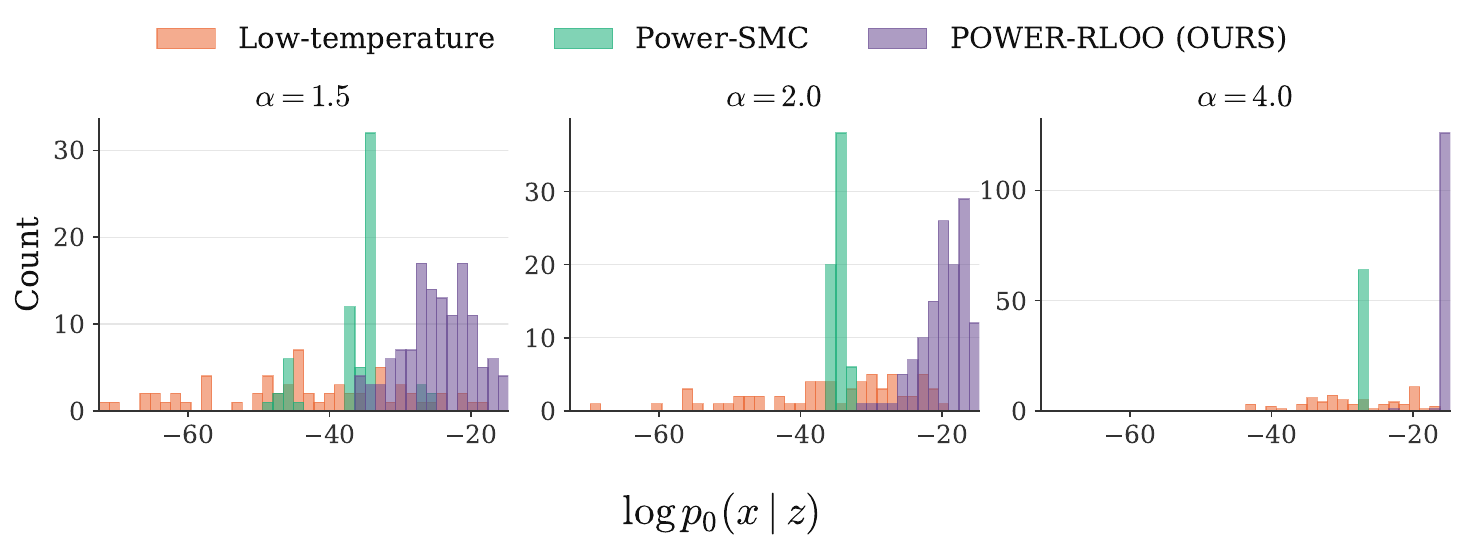}
    \caption{Base-model log-likelihoods of samples drawn from low-temperature decoding ($\tau = 1/\alpha$), Power-SMC, and our \textsc{Power-RLOO} on Qwen2.5-7B-Instruct (MATH500, Question 1), for $\alpha \in \{1.5, 2.0, 4.0\}$. Power-SMC's samples collapse into a narrow, near-identical likelihood band that is largely insensitive to $\alpha$, consistent with particle degeneracy. The narrow band indicates fewer than 10 effective samples. Power-RLOO, in contrast, shifts its mass toward higher likelihood as $\alpha$ grows, matching the behavior expected under $\pi_\alpha$.}
    \label{fig:log_prob_comparision}
\end{figure}

\section{Background and Related Work}
\label{sec:background}

Conditioned on a prompt $z$, a large language model (LLM) defines a probability distribution over the set of finite sequences via a series of conditional probabilities as follows. Let $\mathcal{V}$ denote the vocabulary (token set), and let $x = (x_1, x_2, \dots, x_T) \in \mathcal{V}^T$ denote a generated sequence of length $T$, where $x_T = \texttt{<eos>}$ is a distinguished end-of-sequence token. The model defines a conditional distribution over the next token given the prompt and previously generated tokens:
\[
p_0(x_t \mid z, x_{<t}),
\]
where $x_{<t} = (x_1, \dots, x_{t-1})$ denotes the (possibly empty) prefix of tokens generated so far. The probability of an entire sequence $x$ given the prompt $z$ is then obtained via the chain rule of probability:
\begin{equation}
p_0(x \mid z) = \prod_{t=1}^{T} p_0(x_t \mid z, x_{<t}).
\label{eq:base-model}    
\end{equation}

This autoregressive factorization allows the model to generate sequences token-by-token, sampling each $x_t$ from $p_\theta(\cdot \mid z, x_{<t})$ conditioned on all previously generated tokens, until the \texttt{<eos>} token is produced or a maximum length is reached.

\subsection{Sharpening LLMs at Test-time}

\textbf{Low Temperature Sampling}. A common simple strategy for biasing generation towards higher-likelihood continuations is low temperature sampling, in which each conditional is rescaled by a parameter $\tau > 0$:
\begin{equation}
    p_0^\tau(x_t \mid z, x_{<t}) \propto p_0(x_t \mid z, x_{<t})^{1/\tau}.
    \label{eq:temp_conditional}
\end{equation}
Taking $\tau < 1$ sharpens the distribution over the next token, concentrating probability mass on the model's most likely continuations at each step, with $\tau \to 0$ recovering greedy (most-likely) decoding and $\tau = 1$ recovering the original conditional $p_0(x_t \mid z, x_{<t})$. Sampling a full sequence under temperature $\tau$ then proceeds as above by applying the chain rule to these tempered conditionals, adding negligible additional cost over standard LLM sampling.
However, as the rescaling in \eqref{eq:temp_conditional} is applied independently at each timestep, this sequence-level distribution is only ever tempering the model's \emph{immediate} next-token beliefs --- it has no mechanism for taking into account how a given choice of $y_t$ shapes the likelihood of the tokens that follow it.

\textbf{Power Sampling}. To account for the likelihood of these ``future paths,'' \citet{karan2025reasoningsamplingbasemodel} instead propose sampling from the \emph{power distribution} which acts on the sequence as a whole:
\begin{equation}
    \pi_\alpha(x \mid z) = \frac{p_0(x \mid z)^\alpha}{Z_\alpha(z)} \quad \textrm{for} \quad Z_\alpha(z) = \sum_{x' \in \mathcal{X}} p_0(x' \mid z)^\alpha , \qquad \alpha > 1.
    \label{eq:power_distribution_w_const}
\end{equation}
Here, the normalizing constant $Z_\alpha(z)$, which sums over all possible sequences $x' \in \mathcal{X}$, is analytically intractable which means that direct simulation from \eqref{eq:power_distribution_w_const} is infeasible as we no longer have an autoregressive structure. Although strategies have been proposed via approximate sampling algorithms, based on MCMC \citep{karan2025reasoningsamplingbasemodel} and SMC \citep{ji2026scalablepowersamplingunlocking,azizi2026powersmclowlatencysequencelevelpower}. However, we will later demonstrate, in Section~\ref{sec:pitfalls}, that these samplers fail to accurately sample the power distribution \eqref{eq:power_distribution_w_const} under realistic compute budgets.

\subsection{Fine-tuning LLMs }

In Section \ref{sec:power-rloo} we will propose an alternative approach for sampling an LLM's power distribution --- fine-tuning the LLM to directly target the power distribution.%
We now provide requisite overview of LLM fine-tuning methods.

\textbf{Fine-tuning via reinforcement learning.} RL fine-tuning algorithms such as GRPO \citep{shao2024deepseekmathpushinglimitsmathematical} assign a reward $r(x)$ to each sample $x$ drawn from the model and update the policy to shift probability mass toward high-reward samples. When this reward can be computed automatically by checking the output against a ground-truth answer or test suite, rather than by a learned reward model or human judgment, the setting is known as reinforcement learning with verifiable rewards (RLVR). Fine-tuning is achieved by optimizing a RL objective of the form
\begin{equation}
    J_{\text{KL-RL}}(\theta) = \mathbb{E}_{x \sim \pi_\theta}\left[ r(x) \right] - \beta \, D_{\text{KL}}\!\left( \pi_\theta \,\|\, \pi_{\text{base}} \right),
    \label{eq:kl_rl_objective}
\end{equation}
where $\beta$ controls the regularizing penalty for diverging from the base model. In maximizing \eqref{eq:kl_rl_objective}, the fine-tuned model assigns more probability to high-reward trajectories. More precisely, as derived by \cite{korbak2022rlklpenaltiesbetter}, the fine-tuning update is a form of variational inference with target posterior
\begin{equation}
    \pi^*_{\text{KL-RL}}(x) = \frac{1}{Z} \, \pi_{\text{base}}(x) \exp\!\left( r(x) / \beta \right).
    \label{eq:kl_rl_posterior}
\end{equation}

In practice, to implement this we use LoRA \citep{hu2021loralowrankadaptationlarge}, a parameter-efficient fine-tuning paradigm that exploits the rank deficiency of LLMs, representing weight updates as low-rank matrices while keeping the base model $\pi_0$ frozen.

\paragraph{Self-Improvement.} Our setting of fine-tuning towards the power distribution fits into the field of \emph{self-improvement} \citep{huang2022largelanguagemodelsselfimprove, wang2023selfinstructaligninglanguagemodels, yuan2025selfrewardinglanguagemodels}, in which intrinsic signals from within the model are used in place of external rewards or data. For example, \citet{huang2024selfimprovementlanguagemodelssharpening} explore using the model's own log-probability as a reward signal,
\begin{equation}
    r_{\text{self}}(x \mid z) := \log \pi_{\text{base}}(x \mid z),
    \label{eq:self_reward_def}
\end{equation}
both at inference time and within an RLHF loop, to target a distribution over responses that maximizes the self-reward;
\begin{equation}
    \hat{\pi}(z) \approx \operatorname*{arg\,max}_{x \in \mathcal{X}} \, r_{\text{self}}(x \mid z; \pi_{\text{base}}).
    \label{eq:self_improvement_sharpening}
\end{equation}
Notably, this target coincides with taking $\alpha \to \infty$ in the power distribution \eqref{eq:power_distribution_w_const}, recovering the mode of $\pi_{\text{base}}$. \citet{huang2024selfimprovementlanguagemodelssharpening} find that this reward signal consistently improves accuracy over greedy decoding across a range of benchmarks, lending empirical support to this choice of reward.

\textbf{(Un)importance of reward quality}. The underlying mechanics of RLVR  remain an active area of research, and a growing body of work has begun to challenge some of its assumptions. \citet{shao2026spuriousrewardsrethinkingtraining} show that mathematical reasoning can be improved even when trained on reward signals that are weakly, or even negatively, correlated with true correctness, while 1-shot RLVR \citep{wang2025reinforcementlearningreasoninglarge} shows that a single training example can recover most of the gains typically attributed to training on thousands. In a related vein, self-certainty and other internal-feedback methods \citep{zhao2026learningreasonexternalrewards} dispense with ground-truth answers and verifiers altogether, using the model's own confidence as the sole reward signal and still matching RLVR performance in-domain, suggesting that the specific correctness signal driving RLVR may matter far less than is commonly assumed.

\section{The Pitfalls of Existing Sharpening Schemes}
\label{sec:pitfalls}

We consider three inference-time schemes recently proposed for sampling
from $\pi_\alpha$: Metropolis--Hastings
\citep{karan2025reasoningsamplingbasemodel}, rollout estimation of the power
conditional \citep{ji2026scalablepowersamplingunlocking}, and sequential Monte
Carlo \citep{azizi2026powersmclowlatencysequencelevelpower}. Although each is theoretically exact in
the infinite limit, we show that
at practical budgets each scheme collapses to the same behavior: keeping the
highest-likelihood candidates and discarding the rest. The samplers therefore act
as a likelihood-guided search, largely insensitive to $\alpha$, and their gains
cannot be attributed to the power distribution.

\paragraph{MCMC.} \citet{karan2025reasoningsamplingbasemodel} propose sampling
from $\pi_\alpha$ approximately using a Metropolis--Hastings (MH) scheme.
Candidate blocks, or subsequences, of tokens are iteratively proposed and
accepted according to the MH acceptance ratio \eqref{eq:acceptance_ratio}. The
authors show that, compared to sampling from the base model
\eqref{eq:base-model}, their strategy steers generation towards
higher-probability sequences. However, the size of the underlying sequence
space means that MCMC schemes of this kind suffer from slow mixing and
substantial compute overhead per generated sample. Furthermore, proposing at the level
of blocks rather than tokens means that the log-likelihoods being compared
differ by tens of nats, which reduces the sensitivity to $\alpha$.

Let $p_{\mathrm{prop}}$ be the proposal that simulates new tokens $x'$,
conditional on the existing tokens $x$, which are assumed to be distributed
according to $\pi_\alpha$. The new tokens $x'$ are accepted with probability
\begin{equation}
  Acc(x', x) \;=\; \min\Bigg(
    1,\;
    \underbrace{\frac{\pi_\alpha(x')}{\pi_\alpha(x)}}_{\rule{0pt}{2.2ex}A}
    \cdot
    \underbrace{\frac{p_{\mathrm{prop}}(x \mid x')}{p_{\mathrm{prop}}(x' \mid x)}}_{\rule{0pt}{2.2ex}B}\Bigg).
  \label{eq:acceptance_ratio}
\end{equation}
Figure~\ref{fig:mcmc_diag}(a--b) shows that by increasing $\alpha$, the term $A$ begins to dominate the acceptance ratio \eqref{eq:acceptance_ratio}. This means that a proposal is accepted whenever its log-probability is high, with the proposal term $B$ contributing little to the probability of acceptance. We compare 400 MCMC steps against a greedy hill climber, which corresponds with $B$ = 1 (red).  The log-probability traces in
Figure~\ref{fig:mcmc_diag}(c--d) show the same pattern: as $\alpha$ grows the
chains stagnate at high-density regions rather than exploring. Despite this,
\citet{karan2025reasoningsamplingbasemodel} demonstrate improved reasoning, suggesting
that the benefit comes not from convergence to the power distribution but from
the mode-seeking nature of their algorithm.
\begin{figure}[t]
    \centering
    \begin{subfigure}[b]{0.47\textwidth}
        \centering
        \includegraphics[width=\textwidth]{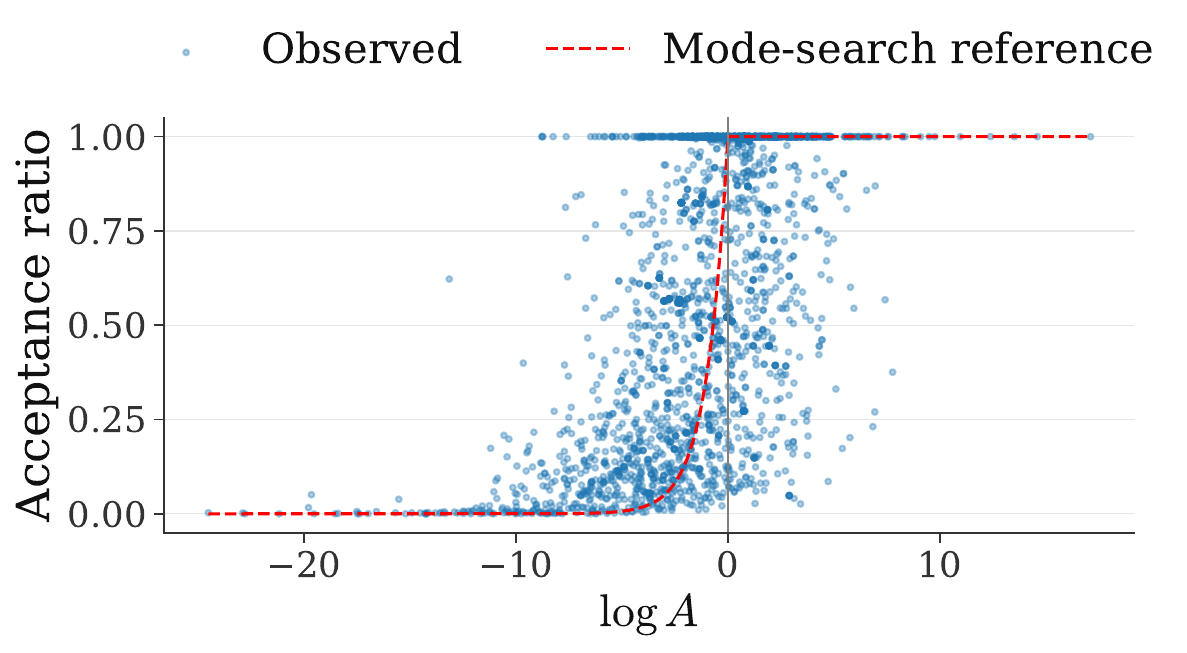}
        \caption{Acceptance Ratio vs. $\log A$ for $\alpha =2$}
        \label{fig:mcmc_acc_2}
    \end{subfigure}
    \begin{subfigure}[b]{0.47\textwidth}
        \centering
        \includegraphics[width=\textwidth]{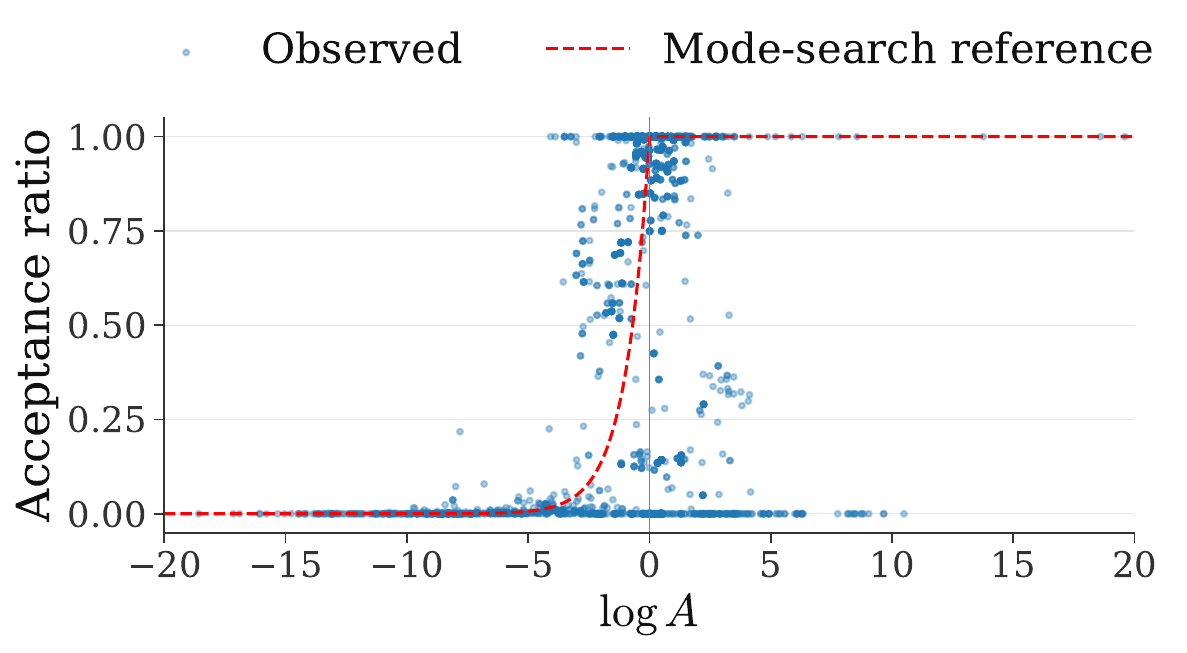}
        \caption{Acceptance Ratio vs. $\log A$ for $\alpha =4$}
        \label{fig:mcmc_acc_4}
    \end{subfigure}
    \vspace{1em}

    \begin{subfigure}[b]{0.47\textwidth}
        \centering
        \includegraphics[width=\textwidth]{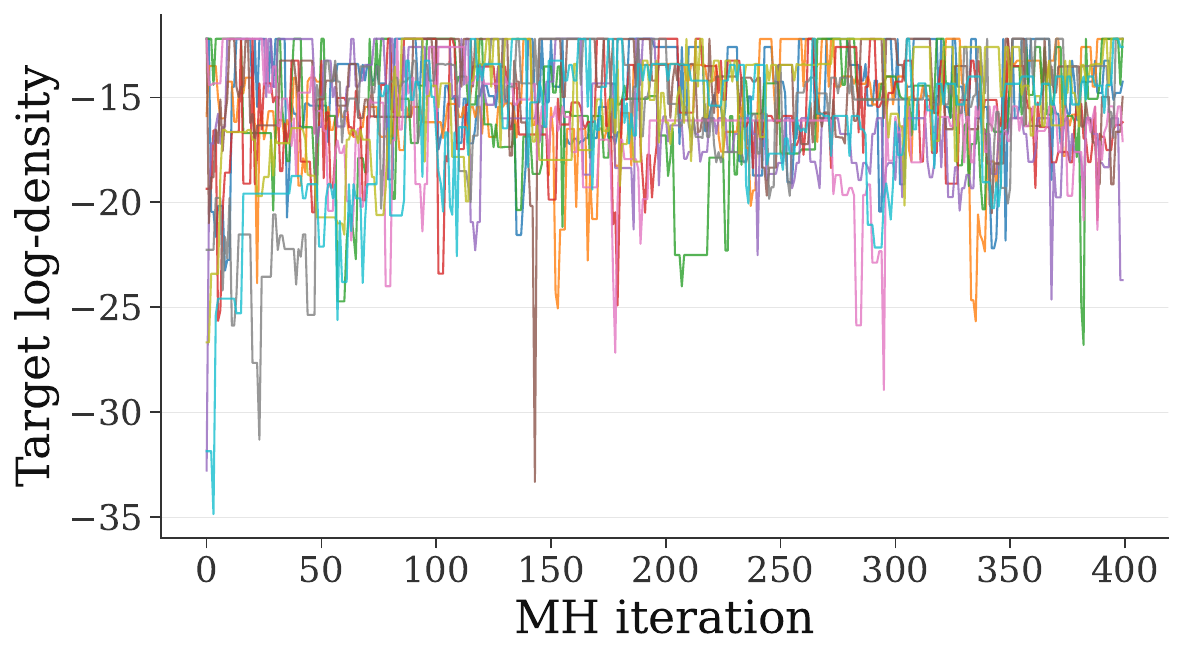}
        \caption{Trace plot for $\alpha =2$}
        \label{fig:mcmc_trace_2}
    \end{subfigure}
    \begin{subfigure}[b]{0.47\textwidth}
        \centering
        \includegraphics[width=\textwidth]{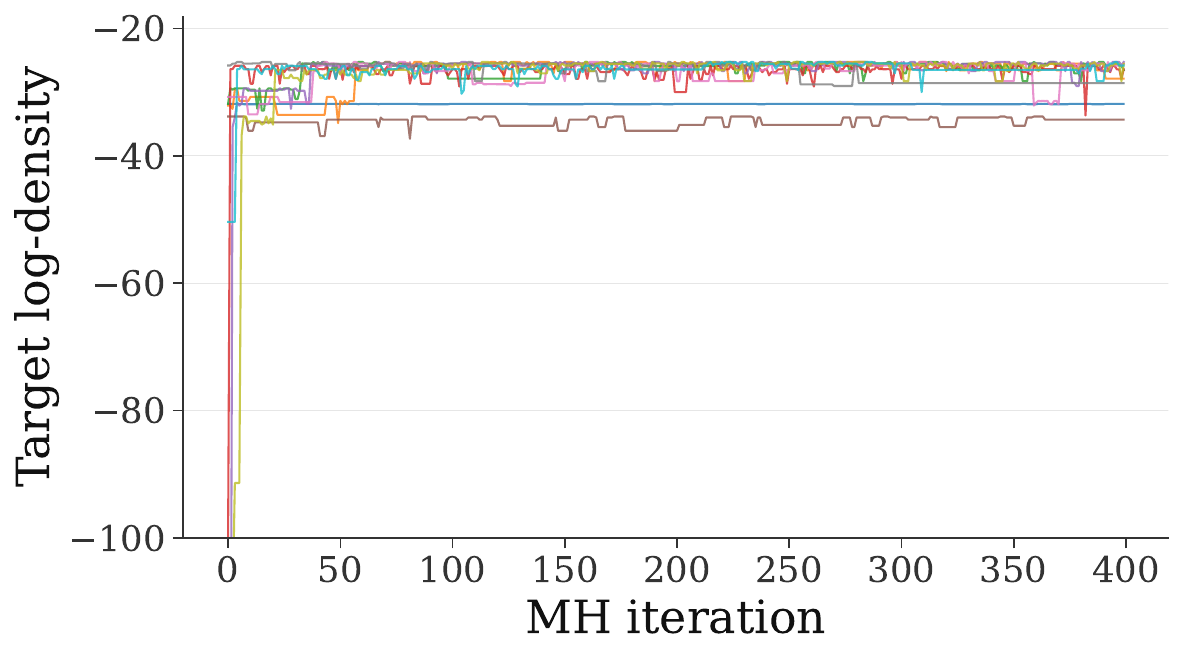}
        \caption{Trace plot for $\alpha =4$}
        \label{fig:mcmc_trace_4}
    \end{subfigure}

    \caption{Diagnostics for the MCMC sampler. As $\alpha$ increases, the chains progressively fail to explore: the trace plots (c--d) show increasing stagnation, and the acceptance behavior (a--b) reveals the sampler collapsing into a greedy hill-climber that accepts almost exclusively on the basis of higher likelihood.}
    \label{fig:mcmc_diag}
\end{figure}

\paragraph{Rollout estimation of the twist.}
\citet{ji2026scalablepowersamplingunlocking} observe that the conditional
distribution induced by $\pi_\alpha$ at each timestep is a tempered next-token
distribution reweighted by a correction factor $\zeta_t$,
\begin{equation}
    \pi_\alpha(x_t \mid z, x_{<t}) = \frac{p_0(x_t \mid z, x_{<t})^\alpha \, \zeta_t(x_t)}{\sum_{x_t' \in \mathcal{V}} p_0(x_t' \mid z, x_{<t})^\alpha \, \zeta_t(x_t')}; \: \textrm{for} \: 
    \zeta_t(x_t') = \sum_{x_{t+1:T}} p_0(x_{t+1:T} \mid z, x_{<t}, x_t')^\alpha,
    \label{eq:power_conditional}
\end{equation}
where $\mathcal{V}$ is the vocabulary. We refer to $\zeta_t$, the tempered partition function over all completions,  as the
\emph{twist}. \cite{ji2026scalablepowersamplingunlocking} estimate this intractable quantity using Monte Carlo with rollout blocks of horizon $H$ from the base model. However, each term in this average is the likelihood of a completion hundreds of tokens long, raised to the power $\alpha-1$, and such terms vary across rollouts by hundreds of nats. The average is therefore dominated by its largest term, so $\hat\zeta_t$ is effectively determined by the single best rollout.
The candidate weights are then a softmax over the log-likelihood, so for any $\alpha>1$ the selected block is approximately the one whose best rollout has the highest likelihood, motivating subsequent work that
avoids explicit twist estimation altogether.

\paragraph{Power-SMC.}
\citet{azizi2026powersmclowlatencysequencelevelpower} avoid estimating the twist altogether by maintaining a population of sequences each with importance weights. They run a standard SMC sampler on the \textit{prefix targets} $\gamma_t(x_{1:t}) = p(x_{1:t}\mid z)^\alpha$. A population of $N$ particles is advanced one token at a time under a prefix-only proposal $q_t(\cdot \mid z, x_{<t})$, each particle's weight is multiplied by the incremental importance weight $\omega_t = p_0(x_t\mid z,x_{<t})^\alpha / q_t(x_t\mid z, x_{<t})$, and the population is resampled whenever the effective sample size (ESS) falls below $\kappa N$.  However, the log-weight of a particle becomes a sum of $t$ non-positive terms whose magnitudes vary from prefix to prefix, so the variance of log-weights across the population grows
linearly in $t$ and the ESS collapses within a few tokens of every reset.

\begin{figure}[t]
    \centering
    \begin{subfigure}[b]{0.48\textwidth}
        \centering
        \includegraphics[width=\textwidth]{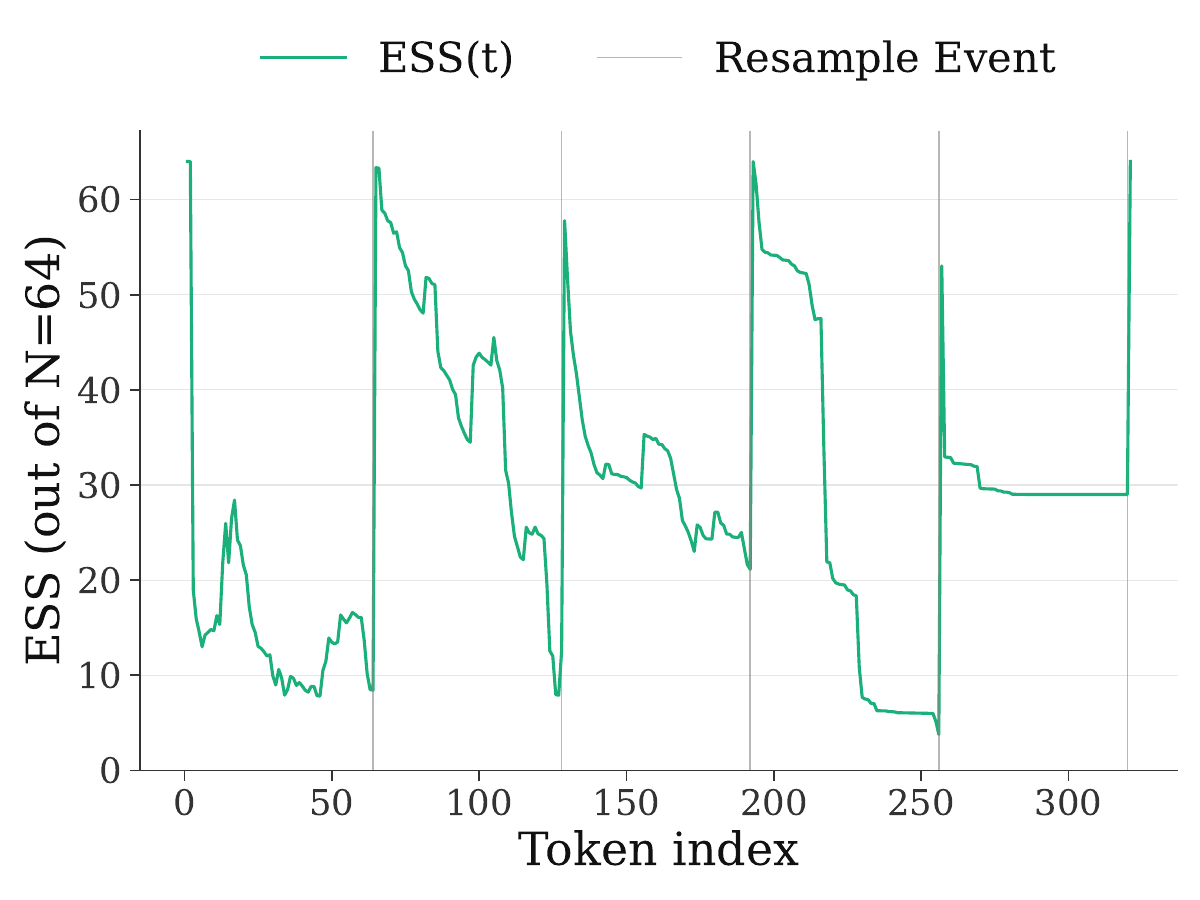}
        \caption{ESS when resampling}
        \label{fig:smc_ess}
    \end{subfigure}
    \hfill
    \begin{subfigure}[b]{0.48\textwidth}
        \centering
        \includegraphics[width=\textwidth]{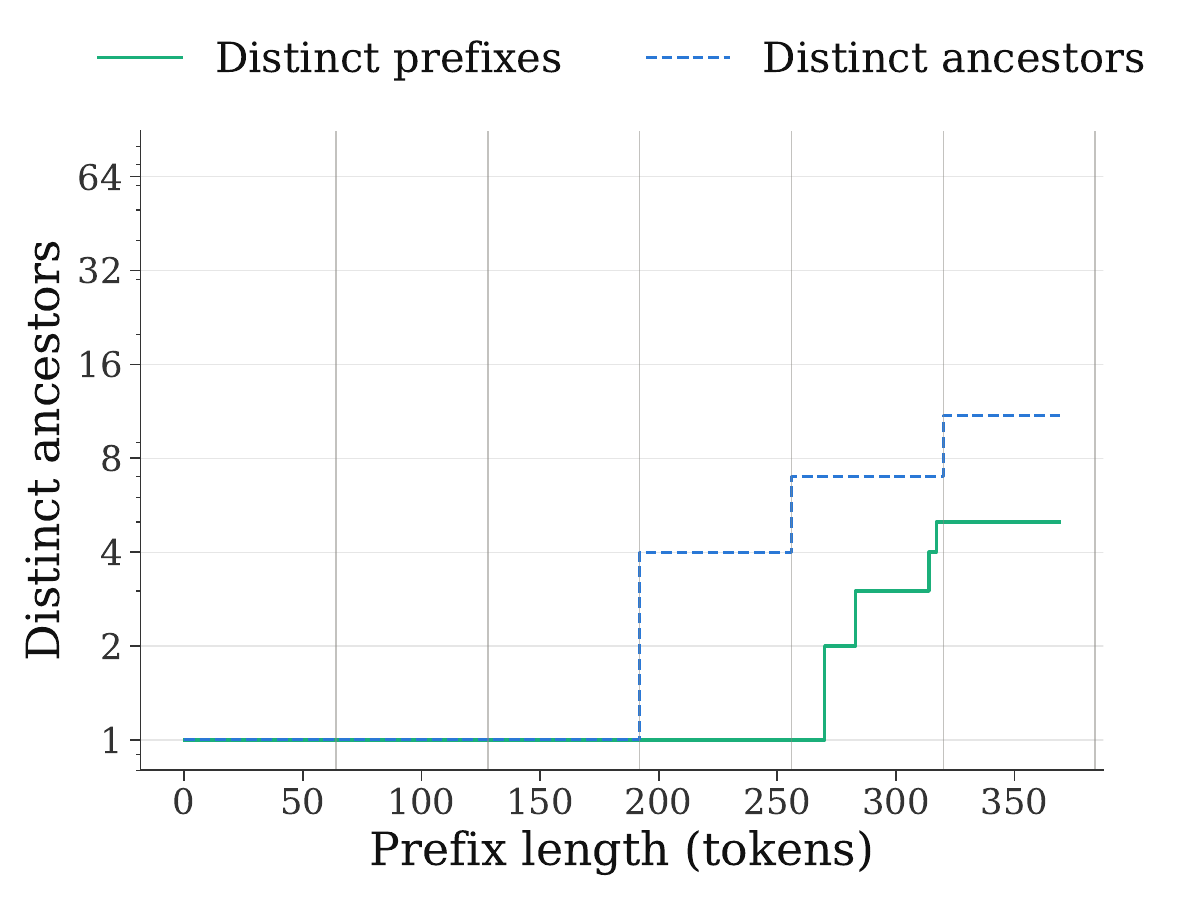}
        \caption{Distinct Ancestors}
        \label{fig:smc_ancestors}
    \end{subfigure}
    \caption{Ancestral diversity and effective sample size over the SMC trajectory. Resampling occurs whenever diversity drops too low, which, when compounded over the sequence, leaves the final population with few distinct ancestors and limited diversity.}
    \label{fig:smc_diag}
\end{figure}

Figure~\ref{fig:smc_diag} shows the ESS trajectory at $\alpha=4$ with $N=64$ and $\kappa=0.5$, where the ESS is checked every $64$ tokens. Between resampling events, the ESS decays from $64$ to $\approx 8$ within 64 tokens (panel~(a)), so that by the time of resampling the normalized weight is carried by a handful of particles. Panel~(b) traces the ancestry of the final population: by the end of generation the particles share only 6 distinct prefixes. Combined with a proposal that is already sharp, the population from which the final sample is drawn consists of a few near-identical sequences, as demonstrated by the narrow, $\alpha$-insensitive band of log-likelihoods in Figure~\ref{fig:log_prob_comparision}.

To investigate this degeneracy further, we propose running a simple baseline where we replace weighted
resampling by deterministic selection. Our \emph{Best-of-$N$ with Resampling}
baseline runs $N$ low-temperature samples in parallel and, every $R$ tokens,
keeps the $k$ prefixes with the highest base-model log-likelihood and
duplicates them to restore the population. Note that this simple selection rule ranks by cumulative log-likelihood whereas Power-SMC ranks by cumulative R\'enyi entropy of the traversed
prefixes.

\section{POWER-RLOO}
\label{sec:power-rloo}

We now propose Power-RLOO, a method to fine-tune an LLM to directly target $\pi_\alpha$, avoiding the need for SMC approximations. We establish that $\pi_\alpha$ is the optimum of a KL-regularized RL objective with log-likelihood rewards, and derive an efficient token-level gradient estimator. This advance will enable a direct analysis of whether approximating $\pi_\alpha$ improves reasoning. Throughout, $z$ denotes the prompt, $x = (x_1, \dots, x_T)$ a sampled response, $\pi_0 \equiv \pi_{\text{base}}$ the frozen base model, and $\pi_\theta$ the model being fine-tuned, initialized at $\pi_0$.

\textbf{Sharpening as KL-regularized RL.} Recall from \eqref{eq:kl_rl_posterior} that the KL-regularized fine-tuning objective in \eqref{eq:kl_rl_objective} is maximized by $\pi^*(x \mid z) \propto \pi_0(x \mid z) \exp\big(r(x)/\beta\big)$. Choosing the reward
\begin{equation}
    r(x) = \beta(\alpha - 1) \log \pi_0(x \mid z)
    \label{eq:power_reward_choice}
\end{equation}
gives $\pi^* \propto  \pi_0^{\,\alpha}$, so recovers the power distribution $\pi_\alpha$. Substituting this reward into \eqref{eq:kl_rl_objective} gives:
\begin{equation*}
    J_{\text{KL-RL}}(\theta)
    = \beta \, \mathbb{E}_{x \sim \pi_\theta(\cdot \mid z)}\big[ \alpha \log \pi_0(x \mid z) - \log \pi_\theta(x \mid z) \big]
    = -\beta \Big[ D_{\text{KL}}\big(\pi_\theta \,\|\, \pi_\alpha\big) - \log Z_\alpha(z) \Big],
\end{equation*}
where $Z_\alpha(z)$ is the normalizing constant. Since $\beta > 0$ only rescales the objective and $\log Z_\alpha(z)$ does not depend on $\theta$, maximizing $J_{\text{KL-RL}}$ is equivalent to minimizing the reverse KL divergence to $\pi_\alpha$. Neither $\beta$ nor the intractable $Z_\alpha$ enters the optimization.

\paragraph{RLOO Training Objective.} Factorizing both log-probabilities with the chain rule reveals a token-level reward structure, which we exploit through several variance-reduction techniques to derive a REINFORCE reward-to-go ($G_t$) estimator. Incorporating a leave-one-out baseline, $b_t$ then yields our final gradient estimator;

\begin{equation}
  \nabla_\theta \mathcal{L}
  = \mathbb{E}\!\left[ \sum_{t} \nabla_\theta \log \pi_\theta(x_t \mid z, x_{<t}) \cdot (G_t - b_t) \right],
  \label{eq:final-body}
\end{equation}
with full details and derivation provided in Appendix~\ref{app:trainingobj}.

\paragraph{Efficient fine-tuning by warm start.} The power conditional \eqref{eq:power_conditional} factorizes as $\pi_\alpha(x_t \mid z, x_{<t}) \propto \pi_0(x_t \mid z, x_{<t})^{\alpha}\, \zeta_t(x_t)$: a locally tempered term, times the lookahead twist $\zeta_t$. We therefore sample and score the fine-tuned model at temperature $1/\alpha$ throughout, giving
\begin{equation}
    \pi_\theta(x_t \mid z, x_{<t}) \propto \exp\!\big(\alpha\,[\ell_0(x_t) + \Delta\ell_{\mathrm{LoRA}}(x_t)]\big),
    \label{eq:lowtemp_policy}
\end{equation}
where $\ell_0$ are the base model's logits and $\Delta\ell_{\mathrm{LoRA}}$ is the adapter's contribution. At initialization ($\Delta \ell_{\text{LoRA}} = 0$), $\pi_\theta(x_t \mid z, x_{<t}) \propto \pi_0(x_t \mid z, x_{<t})^{\alpha}$, and matching it to $\pi_\alpha$ shows that the adapter need only learn
\begin{equation}
    \Delta\ell_{\mathrm{LoRA}}(v) = \tfrac{1}{\alpha} \log \zeta_t(v) + \underbrace{c(z, x_{<t})}_{\text{constant}},
    \label{eq:adapter_target}
\end{equation}
i.e the hard to approximate lookahead \emph{twist} term. Appendix~\ref{app:renyi} shows that, under this parametrization, the cost-to-go reduces to the cumulative R\'enyi entropy of future continuations at initialization.  Appendix~\ref{app:warm_start} demonstrates that our warm start requires substantially less training, as we begin closer to the target distribution.%

\begin{tcolorbox}[
    enhanced,
    colback=blue!3,
    colframe=blue!50!black,
    coltitle=white,
    fonttitle=\bfseries,
    title=Benefits of \textsc{Power-RLOO},
    arc=2mm,
    boxrule=0.8pt,
    left=6pt, right=6pt, top=6pt, bottom=6pt,
    lefttitle=8pt, righttitle=8pt,
    toptitle=4pt, bottomtitle=4pt
]
\begin{enumerate}
    \item \textbf{Measurable fidelity.} Because $\pi_\theta$ is an explicit model, we can evaluate its likelihoods and track the loss $\mathcal{L}(\theta) = D_{\text{KL}}(\pi_\theta \,\|\, \pi_\alpha) - \log Z_\alpha$ throughout training. This directly measures how closely we approach the target, rather than inferring it from an approximate sampler.
    \item \textbf{Clean sensitivity analysis.} This lets us isolate the effect of $\alpha$, both on the learned distribution itself and on downstream sample quality.
    \item \textbf{Simplicity.} Training is straightforward to implement and requires none of the engineering machinery (block proposals, resampling, cache management) required by MCMC- and SMC-based approaches.
\end{enumerate}
\end{tcolorbox}

\section{Experiments}
\label{sec:experiments}

We can now, for the first time, concretely assess the value of targeting the power distribution. We deploy \textsc{Power-RLOO}, \textsc{Power-SMC}, and additional heuristics across a range of models and two benchmarks. Full code will be released upon acceptance.

\subsection{Setup}

\paragraph{Models and Datasets.} We evaluate on Qwen2.5-7B (Base and Instruct) and Phi-3.5-mini-Instruct. We use MATH500~\citep{lightman2023letsverifystepstep}, a set of 500 competition-level mathematics problems spanning algebra, geometry, number theory, and combinatorics, to assess multi-step mathematical reasoning. We additionally evaluate on the GPQA Diamond dataset ~\citep{rein2023gpqagraduatelevelgoogleproofqa}, a set of graduate-level, multiple-choice questions in biology, physics, and chemistry written by domain experts to be resistant to simple web lookup.

\paragraph{Generation Config and Hyperparameters.} To target the power distribution as faithfully as possible, we disable top-$p$ and top-$k$ sampling and remove any repetition penalties, ensuring that the distribution we sample from (and compare against) remains unaltered. We follow existing work, and focus on targeting $\alpha=4$. For fine-tuning, we use LoRA (with rank 32) and train in stages across $\alpha \in \{1.5, 2, 3, 4\}$, training for 200 steps at each value of $\alpha$ with a learning rate of 1e-4. Due to compute restraints we limit generation length to 1564. All training was done on a single Nvidia H200 GPU. Comprehensive hyperparameters are detailed in Appendix \ref{app:implementation}.

\paragraph{Baselines.} As motivated in Section~\ref{sec:pitfalls}, we implement a simple resampling scheme in which, after every block of 64 tokens, we select the top-$k$ samples by log-probability and duplicate them; we refer to this as Best-of-$N$ w/ Resampling. We report values with $k=4$ and $N = 16$. We additionally compare against standard Best-of-$N$ \citep{huang2024selfimprovementlanguagemodelssharpening}, where we generate $N$ low temperature samples ($\tau = 0.25$) and select the highest-scoring sequence according to either total or mean log-probability:
\begin{align*}
  \hat{x}_{\mathrm{sum}}  &\;=\; \arg\max_{x} \; \sum_{t} \log p_\theta(x_t \mid z, x_{<t}), \label{eq:best_of_n_sum} \\
  \hat{x}_{\mathrm{mean}} &\;=\; \arg\max_{x} \; \frac{1}{|x|} \sum_{t} \log p_\theta(x_t \mid z, x_{<t}). 
\end{align*}

\subsection{Results and Analysis}
\label{sec:analysis}

We report accuracy across all three models and both benchmarks in Table~\ref{tab:main_results}, observing the following notable patterns in line with the intuition developed by Figure~\ref{fig:log_prob_comparision}.

\begin{table}[t]
\centering
\begin{tabular}{llcc}
\toprule
\textbf{Model} & \textbf{Method} & \textbf{MATH500 (\%)} & \textbf{GPQA (\%)} \\
\midrule
\multirow{6}{*}{Qwen2.5-7B-Base}
    & Baseline decoding                    & 39.0   & 28.6   \\
    & Low-temperature decoding ($\tau = 1/\alpha$) & 63.1   & 31.4   \\
    & Power-SMC                            &  \textbf{70.6}   & 31.9   \\
    & Best-of-$N$-mean                          & 67.2   & \textbf{32.2}   \\
    & Best-of-$N$-sum                          & 62.6   & 32.2   \\
    
    & Best-of-$N$ w/ Resampling            &  \textbf{70.0} & 29.7   \\
    & Power-RLOO  (ours)                        & N/A   & N/A   \\
\midrule
\multirow{6}{*}{Qwen2.5-7B-Instruct}
    & Baseline decoding                    & 73.1   & 35.9   \\
    & Low-temperature decoding ($\tau = 1/\alpha$) & 75.4   & 35.1   \\
    & Power-SMC                            & 76.9   & 35.8   \\
    
    & Best-of-$N$-mean                          &  77.0   & 36.5   \\
    & Best-of-$N$-sum                          & 75.9   & 36.5  \\
    & Best-of-$N$ w/ Resampling            & 76.9   & 32.6   \\
    & Power-RLOO      (ours)                     & \textbf{78.4}   & \textbf{41.0}   \\
\midrule
\multirow{6}{*}{Phi-3.5-mini-instruct}
    & Baseline decoding                    & 38.9   & 29.1   \\
    & Low-temperature decoding ($\tau = 1/\alpha$) & 41.7   & \textbf{29.4}   \\
    & Power-SMC                            & \textbf{46.6}   & 28.0   \\
    & Best-of-$N$-mean                          &  43.8   & 28.7   \\
    & Best-of-$N$-sum                          & 43.8    & 28.6   \\
    & Best-of-$N$ w/ Resampling            & 45.6   & 26.5   \\
    & Power-RLOO  (ours)                         & 43.6   & 29.2   \\
\bottomrule
\end{tabular}
\caption{Pass@1 accuracy on MATH500 and GPQA at $\alpha=4$. \textsc{Power-RLOO} and Power-SMC yield only modest, model-dependent gains over simpler mode-seeking baselines such as Best-of-N. N/A denotes a lack of convergence
due to an undesirable length degeneracy pathology arising from more precisely targeting the power distribution (see Section \ref{sec:analysis})}
\label{tab:main_results}
\end{table}

\paragraph{Sharpening works best on top of a strong prior.} We find that a faithful representation of the power distribution is only beneficial when the underlying base model is already reasonably well-aligned with the task at hand. This is most evident on Qwen2.5-7B-Instruct, where the more accurate sharpening achieved by \textsc{Power-RLOO} leads to the strongest results of any method on both benchmarks (78.4\% on MATH500 and 41.0\% on GPQA). By contrast, on the weaker Phi-3.5-mini-Instruct model, \textsc{Power-RLOO} trails Power-SMC on MATH500 (43.6\% vs.\ 46.6\%), suggesting that sharpening only helps if there is a meaningful signal in the base distribution to sharpen toward.

\paragraph{Standard Best-of-$N$ frequently matches or exceeds Power-SMC.} For GPQA, Best-of-$N$-mean outperforms Power-SMC for every model (e.g.\ Qwen-Base: 32.2\% vs.\ 31.9\%; Qwen-Instruct: 36.5\% vs.\ 35.8\%; Phi: 28.7\% vs.\ 28.0\%), and
  even the more accurate sharpening achieved by our \textsc{Power-RLOO}. On MATH500, Best-of-$N$-mean matches Power-SMC for Qwen-Instruct (77.0\% vs.\ 76.9\%). 
  These results reinforce our core message: much of the benefit attributed to SMC-based sharpening is achievable through simple, training-free selection.

\paragraph{Benefits over low temperature sampling are not guaranteed.} For MATH500, the additional gain from low temperature sampling to Power-RLOO or SMC is sometimes sizeable (e.g.\ Qwen-Base: $63.1\% \to 70.6\%$ with Power-SMC),  however, on GPQA the gap between low-temperature decoding and the more sophisticated methods closes.  Therefore, the gains of sharpening vary significantly by task structure.

\textbf{Faithfully targeting $\pi_\alpha$ leads to length degeneracy}. Shorter trajectories carry higher per-sequence probability and are consequently favored by the power distribution. On Qwen-Base, POWER-RLOO collapsed toward degenerate, overly short completions (denoted N/A in Table \ref{tab:main_results}). Notably, SMC approximations and Best-of-N avoid this problem, precisely because they fail to converge to $\pi_\alpha$. This reveals an important trade-off: faithful approximation of $\pi_\alpha$ exposes pathological behavior that inference-time samplers sidestep through convergence failure. Maintaining reasonable output length under sharpening requires a strong prior over length itself -- one such as that instilled via chat-template tuning in instruct models.

\paragraph{Weak sharpening uncovers latent knowledge.} Sharpening naturally reduces the typical set of sampled trajectories, and a faithful approximation of the power distribution should reflect this. Indeed, as shown in Figure~\ref{fig:pass@k}, POWER-RLOO's pass@k decreases with increasing $\alpha$; however, mild sharpening actually improves this metric at large $k$, suggesting it surfaces latent knowledge already present in the model but rarely sampled at $\alpha=1$. This reveals a practically valuable property of sharpening: while aggressive mode-seeking optimizes raw accuracy, mild sharpening exposes existing reasoning capacity—arguably a more useful outcome than further narrowing the distribution. Power-SMC's poor pass@k stems from particle degeneracy; we report pass@k over re-weighted particles from a single run rather than k independent populations, which would be prohibitively expensive.

\begin{figure}[htbp]
    \centering
    \includegraphics[width=0.8\textwidth]{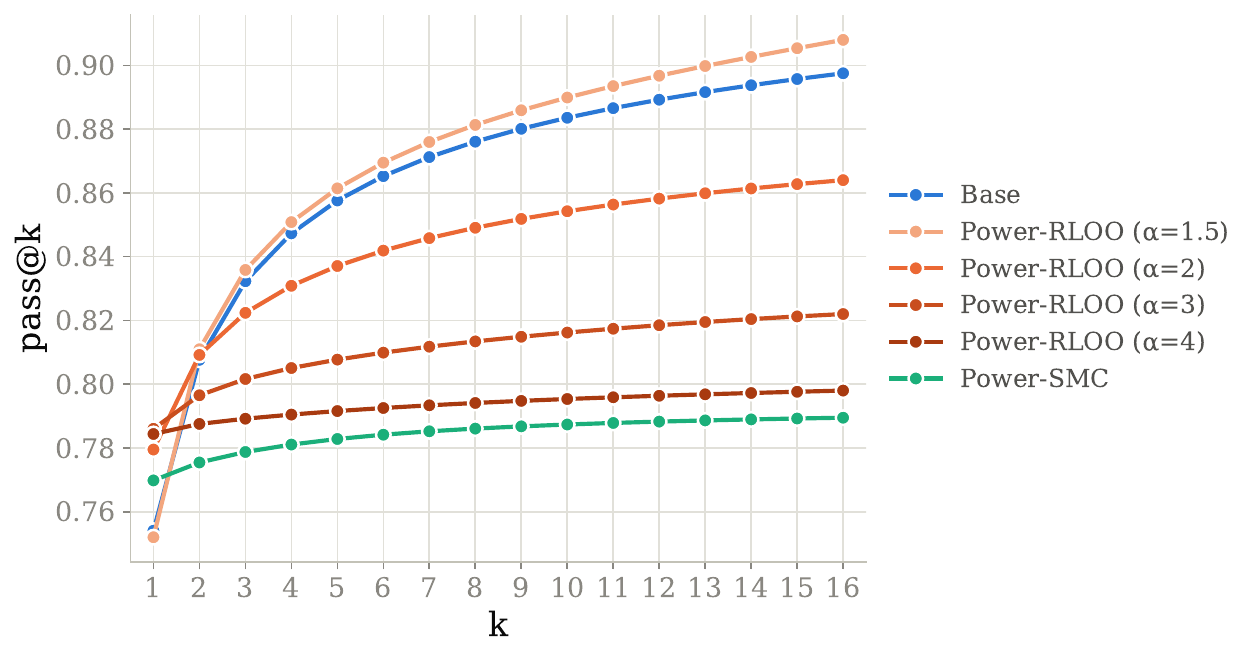}
    \caption{Pass@$k$ accuracy for \textsc{Power-RLOO} and Power-SMC ($\alpha=4$) on Qwen2.5-7B-Instruct (MATH500). As expected, pass@$k$ decreases with $\alpha$; however, mild sharpening ($\alpha=1.5$) actually improves pass@$k$, suggesting it elicits reasoning paths that exist in the base model but are rarely sampled at $\alpha=1$}
    \label{fig:pass@k}
\end{figure}

\section{Conclusion}
We have critically examined recently proposed MCMC and SMC schemes for targeting an LLM's power distribution, showing that they fail to converge and instead behave as likelihood-guided search. We then introduce \textsc{Power-RLOO}, a principled fine-tuning algorithm that directly targets $\pi_\alpha$ and allows, for the first time, an precise investigation into the common claim that sharpening drives reasoning improvement.

\textbf{We found that faithful approximation of $\pi_\alpha$ yields only modest, model-dependent gains, and sometimes undesirable phenomenon, like length degeneracy}. Best-of-N matches or exceeds sharpening without targeting $\pi_\alpha$, demonstrating that improvements stem from selection, not approximation quality. The field should reconsider power sampling as a general path to reasoning improvement. %

\newpage

\subsection*{AI use statement}

In this work, we did not use generative AI tools for any of the tasks requiring disclosure (generating synthetic datasets; developing theoretical models or conceptual frameworks; formulating mathematical claims; providing critical ingredients for proving mathematical claims; assisting in the writing of proofs; proposing or refining hypotheses; designing or providing feedback on research methodology or experiments; implementing methods; assisting with translation; cleaning and reformatting datasets; supporting qualitative and thematic data analysis; or interpreting results). Additionally, we used generative AI tools for creating/modifying scientific figures (plotting help) and editing the research paper to improve readability (proofreading). We have reviewed all AI-assisted work. All AI-suggested plots were checked against the underlying data by the authors for accuracy before inclusion, and all AI-suggested text edits were manually reviewed against the original meaning to ensure no content or claims were altered. We take responsibility for the final content of this work, including text, claims or artifacts produced with the aid of generative AI.

\subsection*{Ethics statement}

This work focuses on improving the data efficiency of large language model training, and does not involve human subjects, new data collection, or the release of a new dataset. We do not anticipate significant risks of harm arising directly from our method. To the extent that our approach reduces the amount of data and compute required to train performant models, it may have a modest positive environmental impact by lowering the computational cost of training. As with any advance that improves the capability or accessibility of LLMs, we acknowledge the general dual-use consideration that more efficient training could also lower the barrier to developing models for harmful purposes; we believe this risk is not unique to our method and is outweighed by the broader benefits of more efficient, less resource-intensive model development. We are not aware of any conflicts of interest or sponsorship arrangements relevant to this work.

\subsection*{Reproducibility statement}

Experimental settings are provided in Section \ref{sec:experiments}, and full code will be published upon acceptance. 

\bibliography{iclr2027_conference}
\bibliographystyle{iclr2027_conference}
\newpage

\appendix
\section{Deriving the Training Objective}
\label{app:trainingobj}
We begin from the Kl informed loss
\begin{equation}
    J_{\text{KL-RL}}(\theta)
    = \beta \, \mathbb{E}_{x \sim \pi_\theta(\cdot \mid z)}\big[ \alpha \log \pi_0(x \mid z) - \log \pi_\theta(x \mid z) \big]
    = -\beta \Big[ D_{\text{KL}}\big(\pi_\theta \,\|\, \pi_\alpha\big) - \log Z_\alpha(z) \Big]. 
\end{equation}

Factorizing both log-probabilities with the chain rule, we obtain the training loss
\begin{equation}
    \mathcal{L}(\theta) \;=\; \mathbb{E}_{x \sim \pi_\theta(\cdot \mid z)}\!\left[ \sum_{t=1}^{T} f_t \right],
    \qquad
    f_t \;\coloneqq\; \log \pi_\theta(x_t \mid z, x_{<t}) \;-\; \alpha \log \pi_0(x_t \mid z, x_{<t}),
    \label{eq:jtheta}
\end{equation}
which we minimize. Because $\mathcal{L}(\theta) = D_{\text{KL}}(\pi_\theta \,\|\, \pi_\alpha) - \log Z_\alpha(z)$, its unique minimizer is $\pi_\theta = \pi_\alpha$.

We now derive \textsc{Power-RLOO}'s training objective using the following identity.

\begin{lemma}[Zero-mean conditional score]
\label{lem:score}
Let $Y$ be any random variable that is conditionally independent of $x_{t}$ given the prefix $x_{<t}$. Then
\begin{equation}
  \mathbb{E}_{x \sim \pi_\theta(\cdot \mid z)}\!\left[\, Y \, \nabla_{\theta} \log \pi_\theta(x_t \mid z, x_{<t}) \,\right] \;=\; 0 .
\end{equation}
\end{lemma}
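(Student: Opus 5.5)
The plan is to condition on the prefix and then apply the score-function identity to the single-token conditional. I will use the tower property to write
\[
\mathbb{E}_{x \sim \pi_\theta}\!\left[ Y \nabla_\theta \log \pi_\theta(x_t \mid z, x_{<t}) \right]
= \mathbb{E}_{x_{<t}}\!\left[ \mathbb{E}\!\left[ Y \nabla_\theta \log \pi_\theta(x_t \mid z, x_{<t}) \,\middle|\, x_{<t} \right] \right].
\]
The assumed conditional independence of $Y$ and $x_t$ given $x_{<t}$ will then be used to factor the inner expectation as
\[
\mathbb{E}[Y \mid x_{<t}] \cdot \mathbb{E}\!\left[ \nabla_\theta \log \pi_\theta(x_t \mid z, x_{<t}) \,\middle|\, x_{<t} \right].
\]

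The second factor vanishes by the usual computation over the finite vocabulary $\mathcal{V}$. Writing $p(v) = \pi_\theta(v \mid z, x_{<t})$, we have
\[
\sum_{v \in \mathcal{V}} p(v)\, \nabla_\theta \log p(v) = \sum_{v \in \mathcal{V}} \nabla_\theta p(v) = \nabla_\theta \sum_{v \in \mathcal{V}} p(v) = \nabla_\theta 1 = 0 .
\]
Exchanging $\nabla_\theta$ and the sum is harmless because the sum is finite. The log-derivative is well defined wherever $p(v) > 0$. Under the softmax parametrization \eqref{eq:lowtemp_policy}, $p(v) > 0$ holds for every $v$, so this is automatic.

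The inner expectation is therefore zero for every prefix, and the outer expectation is zero as well. Since the sequence space may be infinite, I also need the outer expectation to be well defined. I will assume $\mathbb{E}|Y| \cdot \|\nabla_\theta \log \pi_\theta\|$ is finite, which holds when the maximal length $T$ is bounded, as in the experiments.

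The main obstacle is interpretive rather than technical: pinning down what ``conditionally independent'' must mean for the factorization to go through. The natural uses are $Y = b_t$ (a leave-one-out baseline computed from other samples) and $Y = f_s$ for $s < t$ (past costs, which are functions of $x_{<t}$). Note that $f_s$ depends on $\theta$, but only through $\pi_\theta$ evaluated at $x_{\le s}$. For the factorization we only need $Y$ to be a fixed, measurable random variable when we take expectations over $x_t$; we are not differentiating $Y$.

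For $Y = b_t$, the other samples are independent of $x_t$ in the current trajectory. I will therefore state the lemma on the product space of all samples and condition on $x_{<t}$ together with the other samples.

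Dropping the past terms $f_s$, $s<t$, from the REINFORCE estimator and subtracting $b_t$ then follow as immediate corollaries. This yields the reward-to-go form \eqref{eq:final-body}.
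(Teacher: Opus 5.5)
Your proposal is correct and follows the paper's route. You condition on the prefix via the tower rule, factor the inner expectation using the conditional independence, and kill the score term with $\sum_{v\in\mathcal{V}}\nabla_\theta\pi_\theta(v\mid z,x_{<t})=\nabla_\theta 1=0$; your remarks on integrability, softmax positivity, and treating the leave-one-out baseline on the product space of all rollouts are sensible refinements the paper leaves implicit.
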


\begin{proof}
Condition on the prefix $y_{<t}$ and apply the tower rule:
\begin{equation*}
  \mathbb{E}\!\left[ Y \, \nabla_{\theta} \log \pi_\theta(x_t \mid z, x_{<t}) \right]
  = \mathbb{E}_{x_{<t}}\!\left[\, \mathbb{E}\!\left[Y \mid x_{<t}\right] \cdot
      \mathbb{E}_{x_{t} \mid x_{<t}}\!\left[ \nabla_{\theta} \log \pi_\theta(x_{t} \mid z, x_{<t}) \right] \right],
\end{equation*}
where the factorization of the inner expectation follows from the conditional independence of $Y$ and $y_t$ given $y_{<t}$. The inner score term has zero mean:
\begin{align*}
  \sum_{v \in \mathcal{V}} \pi_\theta(v \mid z, x_{<t}) \, \nabla_{\theta} \log \pi_\theta(v \mid z, x_{<t})
  &= \sum_{v \in \mathcal{V}} \nabla_{\theta} \pi_\theta(v \mid z, x_{<t}) \\
  &= \nabla_{\theta} \sum_{v \in \mathcal{V}} \pi_\theta(v \mid z, x_{<t})
  = \nabla_{\theta} 1 = 0,
\end{align*}
where $\mathcal{V}$ denotes the vocabulary, so that summing over $v$ ranges over all possible next tokens.
\end{proof}

Differentiating Equation~\ref{eq:jtheta} via the product rule gives

\begin{equation}
  \nabla_{\theta} \mathcal{L}
  = \underbrace{\mathbb{E}\!\left[ \Big(\textstyle\sum_{s} f_{s}\Big)\, \nabla_{\theta} \log \pi_\theta(x \mid z) \right]}_{\text{distributional}}
  + \underbrace{\mathbb{E}\!\left[ \textstyle\sum_{t} \nabla_{\theta} f_{t} \right]}_{\text{pathwise}} .
  \label{eq:productrule}
\end{equation}

\paragraph{Step 1: the pathwise term vanishes.} Since $\alpha \log \pi_0(x_t \mid z, x_{<t})$ does not depend on $\theta$, we have $\nabla_{\theta} f_{t} = \nabla_{\theta} \log \pi_\theta(x_t \mid z, x_{<t})$, so Lemma 1 (with $Y=1$) annihilates each summand. This cancellation holds \emph{in expectation}, not per sample: dropping the pathwise term discards a mean-zero source of gradient noise, rather than a source of bias.

\paragraph{Step 2: the past drops out (reward-to-go).} Expanding $\nabla_{\theta} \log \pi_\theta(x \mid z) = \sum_{t} \nabla_{\theta} \log \pi_\theta(x_t \mid z, x_{<t})$ in the distributional term leaves a double sum $\sum_{t} \sum_{s} \mathbb{E}\big[ f_{s} \, \nabla_{\theta} \log \pi_\theta(x_t \mid z, x_{<t}) \big]$. For any pair with $s < t$, the quantity $f_{s}$ is a function only of $x_{\le s} \subseteq x_{<t}$, and is therefore conditionally independent of $x_{t}$ given $x_{<t}$; Lemma~\ref{lem:score} (with $Y = f_{s}$) kills each such term. Only the terms with $s \ge t$ survive, leaving

\begin{equation}
  \nabla_{\theta} \mathcal{L}
  = \mathbb{E}\!\left[ \sum_{t} \nabla_{\theta} \log \pi_\theta(x_t \mid z, x_{<t}) \cdot G_{t} \right],
  \qquad
  G_{t} \;\coloneqq\; \sum_{s \ge t} f_{s} .
  \label{eq:rtg}
\end{equation}

\paragraph{Step 3: subtract a baseline.} Lemma~\ref{lem:score} with $Y = b_{t}$, for any $b_{t}$ conditionally independent of $x_{t}$ given $y_{<t}$, gives the final estimator:
\begin{equation}
  \boxed{\;
  \nabla_{\theta} \mathcal{L}
  = \mathbb{E}\!\left[ \sum_{t} \nabla_{\theta} \log \pi_\theta(y_t \mid z, x_{<t}) \cdot \big(G_{t} - b_{t}\big) \right],
  \qquad A_{t} \coloneqq G_{t} - b_{t}. \;}
  \label{eq:final}
\end{equation}

We use the leave-one-out batch mean at each position,
\begin{equation}
  b_{i,t} = \frac{1}{B-1}\sum_{j \ne i} G_{j,t},
  \label{eq:rloo}
\end{equation}

In practice, one does not differentiate Equation~\ref{eq:final} by hand. Instead, we form the surrogate scalar
\begin{equation}
  \widehat{\mathcal{L}}(\theta)
  = \frac{1}{B}\sum_{i=1}^{B}\sum_{t=1}^{T} \mathrm{sg}\!\big(\hat{A}_{i,t}\big)\, \log \pi_\theta(x_{i,t} \mid z_i, x_{i,<t}),
  \label{eq:surrogate}
\end{equation}
where $\mathrm{sg}(\cdot)$ denotes the stop-gradient (\texttt{detach}) operator, and call \texttt{backward} on it directly.

\section{Connection to R\'enyi entropy}
\label{app:renyi}

Our low-temperature parameterization also gives the training signal a clean interpretation. Write $Z_t = \sum_v \pi_0(v \mid z, x_{<t})^{\alpha}$ and let $H_\alpha(p) = \frac{1}{1-\alpha} \log \sum_v p(v)^{\alpha}$ denote the R\'enyi entropy of order $\alpha$. At initialization, $\pi_\theta = \pi_0^{\alpha} / Z_t$, and the per-token cost in Equation~\eqref{eq:jtheta} reduces to
\begin{equation}
    f_t = \log \pi_\theta(x_t \mid z, x_{<t}) - \alpha \log \pi_0(x_t \mid z, x_{<t}) = -\log Z_t
        = (\alpha - 1)\, H_\alpha\!\big(\pi_0(\cdot \mid z, x_{<t})\big).
    \label{eq:renyi_cost}
\end{equation}
Notably, this cost does not depend on the sampled token $x_t$, only on the entropy of the distribution it was drawn from. The cost-to-go is therefore the cumulative R\'enyi entropy along the rest of the rollout,
\begin{equation}
    G_t = (\alpha - 1) \sum_{s \ge t} H_\alpha\!\big(\pi_0(\cdot \mid z, x_{<s})\big).
    \label{eq:renyi_rtg}
\end{equation}
Since the $s = t$ term is fixed by the prefix, the influence assigned to $x_t$ comes entirely from the entropy of its continuations. This observation mirrors the role of the twist in Equation~\eqref{eq:adapter_target}. Because we minimize $\mathcal{L}$, tokens whose continuations are less uncertain than the leave-one-out baseline have their logits raised, and those leading into more uncertain continuations have them lowered.

\section{Low Temperature Sampling in Power-RLOO}

In this section we explore the utility of always using low-temperature sampling when training with Power-RLOO. Appendix~\ref{app:renyi} gives a clean mathematical interpretation of the rewards under low-temperature sampling; here we show that low-temperature sampling also yields smoother and faster convergence to the target. 

Because Power-RLOO maintains a valid distribution throughout training, we can directly observe training dynamics via estimates of the KL divergence. As shown in Figure~\ref{fig:warm_start} (global step 0), the low-temperature sampling distribution starts closer to the power distribution in terms of KL. This is expected: Equation~\eqref{eq:power_conditional} shows that the two are inextricably linked via the twist term learned by the LoRA adapter, Equation~\eqref{eq:adapter_target}. We also observe that the warm start transfers across jumps in $\alpha$, avoiding the sharp KL jump encountered otherwise. This may be interpreted as evidence that the twist learned for a smaller $\alpha$ is a good starting approximation for the twist required at a larger $\alpha$.

\label{app:warm_start}
\begin{figure}[h]
    \centering
    \includegraphics[width=0.9\linewidth]{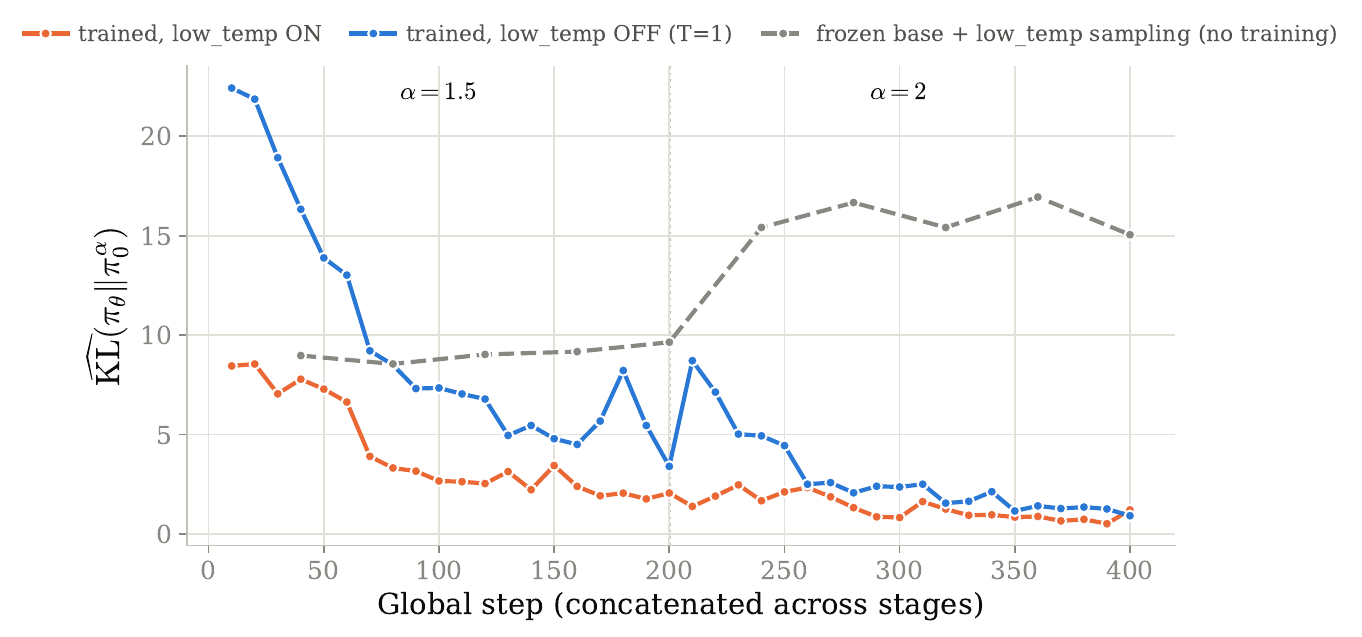}
   \caption{Progression of KL divergence to the target distribution over the course of training (lower is better). By maintaining a valid probability distribution, we are able to compute estimates of KL to track training stability. We observe faster and more stable convergence when using low-temperature sampling.}
    \label{fig:warm_start}
\end{figure}

\section{Implementation Details}
\label{app:implementation}

\paragraph{Model and PEFT.} We conduct experiments with three base models: \texttt{Qwen/Qwen2.5-7B-Instruct}, \texttt{Qwen/Qwen2.5-7B} (base), and \texttt{Phi-3.5-mini-instruct}, all loaded in bf16 with \texttt{sdpa} attention. LoRA is applied with rank 16, $\alpha=32$, dropout 0, no bias, targeting all attention and MLP projections (\texttt{q/k/v/o\_proj}, \texttt{gate/up/down\_proj}). The reference policy $\pi_0$ is the same model with the adapter disabled, avoiding a second resident copy of the base model. The LM head is cast to fp32 before PEFT wrapping via a custom pre-PEFT hook, since TRL's built-in cast breaks on PEFT-wrapped models.

\paragraph{Batching and sampling.} Each optimizer step uses 32 completions per question accumulated over 10 questions, for an effective batch of 320 rollouts per step; one micro-batch corresponds to one question's group, shuffled across the dataset. Prompts are capped at 768 tokens and completions at 1564 tokens. Generation uses a colocated vLLM engine (GPU memory utilization 0.15); truncated importance sampling correction is disabled, as $\pi_0$ rescoring under Hugging Face is treated as ground truth and the vLLM/HF discrepancy is attributable to bf16 kernel noise (verified at the start of training to be within 0.15 nats/token). No-grad old-policy and teacher log-probability passes use Apple's Cut Cross Entropy to avoid materializing the full $(B, T, V)$ logits tensor; the gradient-carrying forward pass inside the TRL loss remains on the standard path.

\paragraph{Optimization.} We use a learning rate of $1\mathrm{e}{-4}$ on LoRA parameters only, with no weight decay, Adam ($\beta_1=0.9$, $\beta_2=0.999$), and gradient-norm clipping at 1.0. The learning-rate schedule is per-stage rather than global: cosine decay to zero with a single restart per stage, 3\% warmup of that stage's steps, and a minimum LR ratio of 0; optimizer state is reset at each stage boundary along with a fresh warmup. We use \texttt{num\_iterations=1}, i.e.\ a purely on-policy REINFORCE estimator with no PPO-style replay or clipping.

\paragraph{Other.} A single adapter is trained across the entire $\alpha$ schedule, without rebuilding the trainer or vLLM engine between stages. All runs use seed 0.

\end{document}